\newcommand{\bx}{{\boldsymbol{x}}}
\newcommand{\bu}{{\boldsymbol{u}}}
\newcommand{\bv}{{\boldsymbol{v}}}
\newcommand{\bz}{{\boldsymbol{z}}}
\newcommand{\by}{\boldsymbol{y}}
\newcommand{\bhx}{{\boldsymbol{\widehat{x}}}}
\renewcommand{\@algocf@capt@plain}{above}
\DeclareMathOperator*{\argmin}{arg\,min}
\newtheorem{theorem}{Theorem}[section]
\newtheorem{lemma}[theorem]{Lemma}
\title{\vspace{-15pt}Sparse Signal Reconstruction with Multiple Side Information using Adaptive Weights for Multiview Sources\vspace{-5pt}}
\begin{document}
%
\maketitle

\begin{abstract}
This work considers reconstructing a target signal in a context of distributed sparse sources. We propose an efficient reconstruction algorithm with the aid of other given sources as multiple side information (SI). The proposed algorithm takes advantage of compressive sensing (CS) with SI and adaptive weights by solving a proposed weighted $n$-$\ell_{1}$ minimization. The proposed algorithm computes the adaptive weights in two levels, first each individual intra-SI and then inter-SI weights are iteratively updated at every reconstructed iteration. This two-level optimization leads the proposed reconstruction algorithm with multiple SI using adaptive weights (RAMSIA) to robustly exploit the multiple SIs with different qualities. We experimentally perform our algorithm on generated sparse signals and also correlated feature histograms as multiview sparse sources from a multiview image database. The results show that RAMSIA significantly outperforms both classical CS and CS with single SI, and RAMSIA with higher number of SIs gained more than the one with smaller number of SIs.
\end{abstract}
\begin{keywords}
Side information, compressive sensing, sparse signal, $n$-$\ell_{1}$ minimization, adaptive weights.
\end{keywords}


\section{Introduction}\label{sec:intro}
\vspace{-7pt}

Recent emerging applications such as visual sensor surveillance and mobile augmented reality are challenging in a distributed sensing scenario of smart tiny devices within heterogeneous networks in real-time. By cooperatively using multiple sensors, we can further enhance gathering and processing information under time-resource constraints for such devices. In particular, multiview object tracking and recognition \cite{Taj2011,ChellappaIEEE08} are challenging how to utilize correlations among the multiview images without communications among them. The distributed sensing challenge may be addressed by representing it as a distributed sparse representation of multiple sources \cite{YangIEEE10}. This raises the needs for representing and reconstructing the sparse sources along with exploiting the correlated information among them. One of the key problems is how to robustly reconstruct a compressed source given supported observations, i.e., other available sources as SI. Attractive CS techniques \cite{DonohoCOM06,CandesTIT06,Beck09,MotaGLOBALSIP14,MotaARXIV14,Scarlett,MotaICASSP15,Weizman15,WarnellTIP15} may enable these distributed sources to deal with such reconstruction problems.

Recently, some CS reconstructions with prior information \cite{MotaGLOBALSIP14,MotaARXIV14,Scarlett,MotaICASSP15,Weizman15,WarnellTIP15} have been investigated in terms of theoretical as well as application aspects. It was shown that if the prior information is good enough, a $l_{1}$-$l_{1}$ minimization \cite{MotaGLOBALSIP14,MotaARXIV14}, which integrates SI into classical CS, improves the reconstruction dramatically. The work in \cite{Scarlett,MotaICASSP15,Weizman15,WarnellTIP15} shows efficient reconstructions with SI for sparse signals from a limited number of measurements in background subtraction and MRI imaging applications. These schemes are restricted to considering only one SI in conditions of good enough qualities. However, we are interested in the multiview heterogeneous sources with quality changing in time, i.e., unknown prior correlations, correspondences, and SI qualities among sources. Therefore, we are aiming at improving the reconstruction given arbitrary SIs, whose possible poor qualities and the multiple SIs are taken into account.

In this work, we propose an efficient reconstruction algorithm with multiple SIs using adaptive weights by solving a weighted $n$-$\ell_{1}$ problem. The algorithm adaptively reconstructs a sparse source from a small number of measurements, which are obtained by a random projection, given other sources. Our previous work \cite{LuongDCC16} proposed a reconstruction algorithm with multiple SI, where the weights are iteratively updated for each given individual index cross all SIs under only its local constraint. To improve performance, the algorithm we propose here contributes to solving the $n$-$\ell_{1}$ problem that computes the adaptive weights in two levels. Intra-SI weights are first computed in each individual SI then global inter-SI weights are updated at every iteration of the reconstruction. By this way, the proposed algorithm can robustly take advantage of exploiting both intra-source and inter-source correlations to adapt to on-the-fly correlation changes in time, e.g., multiview images with different SI qualities.

The rest of this paper is organized as follows. Section \ref{relatedWork} reviews related works including fundamental problems of signal recovery and CS with SI. Our problem statement and proposed algorithm are presented in Sec. \ref{RAMSIA}. We demonstrate performances of the proposed algorithm on distributed sparse sources in Sec. \ref{Experiment}.
\vspace{-10pt}
\section{Related Work}
\vspace{-7pt}
\label{relatedWork}
In this section, we review a fundamental problem of signal recovery from low-dimensional signals \cite{DonohoCOM06,CandesTIT06,Beck09} and CS with SI \cite{MotaGLOBALSIP14,MotaARXIV14,Scarlett,MotaICASSP15,Weizman15,WarnellTIP15}.

\textbf{Signal Recovery}. Low-dimensional signal recovery arises in a wide range of applications in signal processing. Most signals in such applications have sparse representations in some domain. Let $\bx\hspace{-2pt}\in\hspace{-2pt}\mathbb{R}^{n}$ denote a sparse source, which is indeed compressible. The source $\bx$ can be reduced by sampling via a projection \cite{CandesTIT06}. We denote a random measurement matrix for $\bx$ by $\mathbf{\Phi}\hspace{-2pt}\in \hspace{-2pt}\mathbb{R}^{ m\times n} (m\hspace{-2pt}< \hspace{-2pt}n)$, whose elements are sampled from an i.i.d. Gaussian distribution. Thus, we get a compressed vector $\by\hspace{-2pt}=\hspace{-2pt}\mathbf{\Phi}\bx$, also called measurement, consisting of $m$ elements. The source $\bx$ can be recovered \cite{CandesTIT06,DonohoCOM06} by solving:
\begin{equation}\label{l1-norm}
    \min_{\bx} ||\bx||_{1} \mathrm{~subject~to~} \by\hspace{-2pt}=\hspace{-2pt}\mathbf{\Phi}\bx,
\end{equation}
where $||\bx||_{p}\hspace{-2pt}:=\hspace{-2pt}(\sum_{i=1}^{n}|x_{i}|^{p})^{1/p}$ is $\ell_{p}$ norm of $\bx$ wherein $x_{i}$ is an element of $\bx$.

Problem \eqref{l1-norm} becomes finding a solution to
\begin{equation}\label{l1-general}
    \min_{\bx}\{H(\bx) = f(\bx) + g(\bx)\},
\end{equation}
where $f\hspace{-2pt}:=\hspace{-2pt}\mathbb{R}^{n} \hspace{-2pt}\rightarrow\hspace{-2pt}\mathbb{R}$ is a smooth convex function with Lipschitz constant $L_{\nabla f}$ \cite{Beck09} of gradient $\nabla f$
and $g\hspace{-2pt}:=\hspace{-2pt}\mathbb{R}^{n}\hspace{-2pt} \rightarrow\hspace{-2pt}\mathbb{R}$ is a continuous convex function possibly non-smooth. Problem \eqref{l1-norm} is obviously a special case of \eqref{l1-general} with $g(\bx)\hspace{-2pt}=\hspace{-2pt}\lambda ||\bx||_{1}$, where $\lambda\hspace{-2pt}>\hspace{-2pt}0$ is a regularization parameter, and $f(\bx)\hspace{-2pt}=\hspace{-2pt}\frac{1}{2}||\mathbf{\Phi}\bx\hspace{-2pt}-\hspace{-2pt}\by||^{2}_{2}$. 
The results of using proximal gradient methods \cite{Beck09} give that $\bx^{(k)}$ at iteration $k$ can be iteratively computed by:
\begin{equation}\label{l1-proximal}
    \bx^{(k)}= \Gamma_{\frac{1}{L}g}\Big(\bx^{(k-1)}\hspace{-2pt}-\hspace{-2pt}\frac{1}{L}\nabla f(\bx^{(k-1)})\Big),
\end{equation}
where $L\hspace{-2pt}\geq \hspace{-2pt}L_{\nabla f}$ and $\Gamma_{\hspace{-2pt}\frac{1}{L}g}(\bx)$ is a proximal operator that is defined by
\begin{equation}\label{l1-proximalOperator}
    \Gamma_{\frac{1}{L}g}(\bx) = \argmin_{\bv \in\mathbb{R}^{n}}\Big\{ \frac{1}{L}g(\bv) + \frac{1}{2}||\bv-\bx||^{2}_{2}\Big\}.
\end{equation}

\textbf{CS with SI}. CS with prior information or SI via $\ell_{1}$-$\ell_{1}$ minimization is improved dramatically if SI has good enough quality \cite{MotaGLOBALSIP14,MotaARXIV14,MotaICASSP15}. The $\ell_{1}$-$\ell_{1}$ minimization considers reconstructing $\bx$ given a SI, $\bz \in \mathbb{R}^{n}$ by solving the problem \eqref{l1-general} with $g(\bx)\hspace{-2pt}=\hspace{-2pt}\lambda (||\bx||_{1}+||\bx-\bz||_{1})$, i.e., solving:
\begin{equation}\label{l1-l1minimization}
    \min_{\bx}\Big\{\frac{1}{2}||\mathbf{\Phi}\bx-\by||^{2}_{2} + \lambda (||\bx||_{1}+||\bx-\bz||_{1})\Big\}.
\end{equation}
The $\ell_{1}$-$\ell_{1}$ minimization \eqref{l1-l1minimization} has an expression for the bound \cite{MotaGLOBALSIP14,MotaARXIV14,MotaICASSP15} on the number of measurements required to successfully reconstruct $\bx$ that is a function \cite{MotaGLOBALSIP14,MotaARXIV14,MotaICASSP15} of the quality of SI $\bz$.

\section{Reconstruction with Multiple SI using Adaptive Weights}
\label{RAMSIA}
\subsection{Problem Statement}\label{problem}
We consider a problem statement regarding how to efficiently reconstruct a sparse source given multiple SIs. The reconstruction for the distributed sensing of sparse sources is motivated by an idea of how multiple heterogeneous-dynamic lightweight cameras can perform robustly on the desired applications under resource-time constraints, where sensing and processing are expensive. CS \cite{DonohoCOM06,CandesTIT06,Beck09} is emerged as an elegant technique to deal with these challenges for such real-time sensing and processing. Despite prohibiting the communication between tiny cameras, we can take advantage of their correlations at the decoder through supported SIs. The known SIs can be some other reconstructed sources existing at the decoder, where we can exploit inter-source redundancies for a robust reconstruction from the reduced sparse source. Therefore, we need an advanced reconstruction flexibly adapting to on-the-fly changes according to the heterogeneous sparse sources.

\begin{figure*}[t!]
\centering
\setlength{\tabcolsep}{1pt}
\renewcommand{\arraystretch}{0.1}

\subfigure[\vspace{-9pt}View 1]{\includegraphics[width=0.15\textwidth]{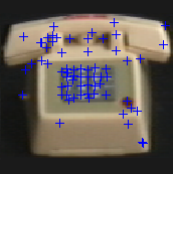}\label{figView1}}
\subfigure[\vspace{-9pt}1000-D histogram $\bx$ of View 1]{\includegraphics[width=0.55\textwidth]{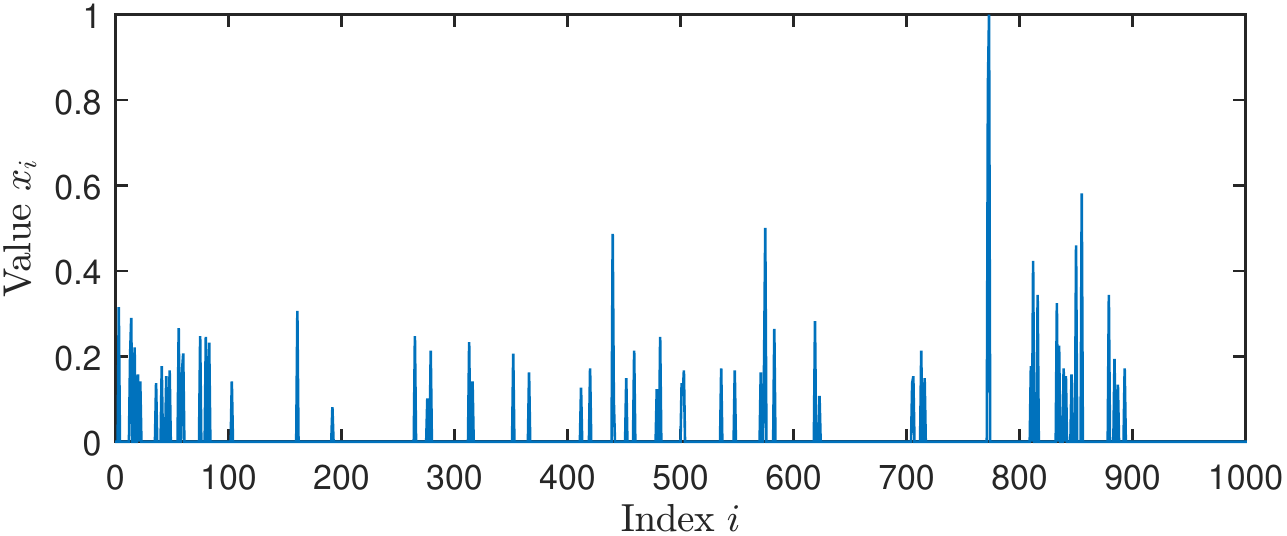}\label{figFeatVecX}}
\\
\subfigure[\vspace{-9pt}View 2]{\includegraphics[width=0.15\textwidth]{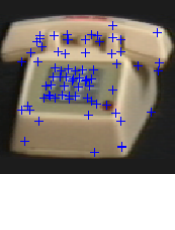}\label{figView2}}
\subfigure[\vspace{-9pt}1000-D histogram $\bz_{1}$ of View 2]{\includegraphics[width=0.55\textwidth]{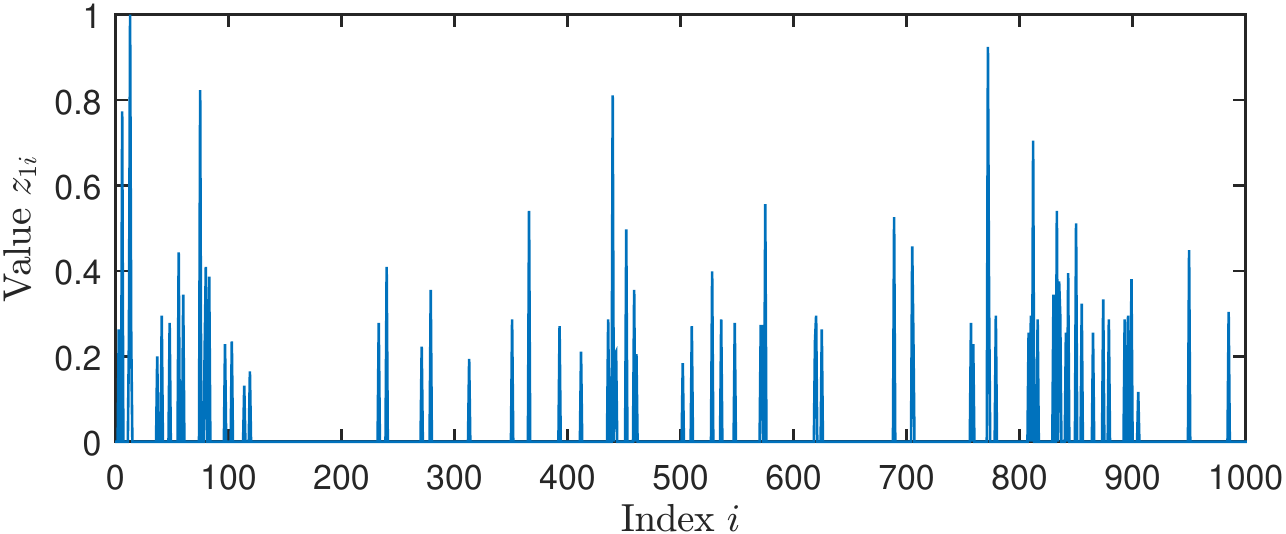}\label{figFeatVecZ1}}\\
\subfigure[\vspace{-9pt}View 3]{\includegraphics[width=0.15\textwidth]{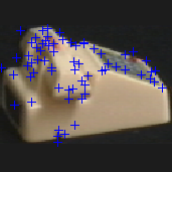}\label{figView3}}
\subfigure[\vspace{-9pt}1000-D histogram $\bz_{2}$ of View 3]{\includegraphics[width=0.55\textwidth]{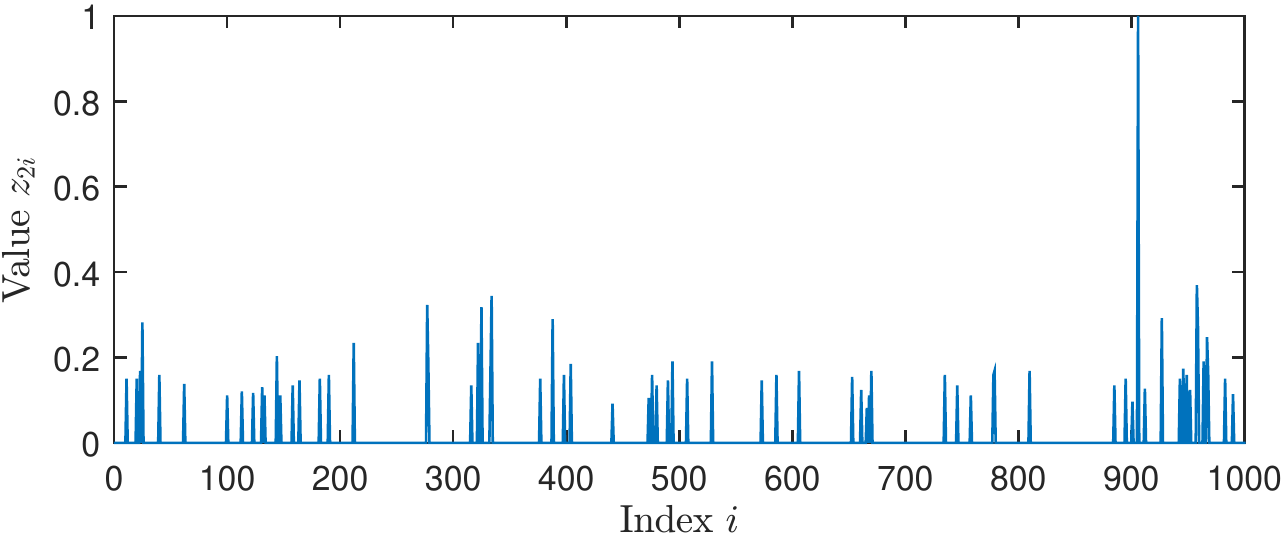}\label{figFeatVecZ2}}
\caption{SIFT-feature histograms of object 60 in the COIL-100 \cite{Nene96}.}\label{figFeatVec}
\end{figure*}
Let us consider the example scenario of the distributed compressed sensing of tiny cameras for multiview object recognition in Fig. \ref{figFeatVec}, where a corresponding feature histogram acquired from a given camera is considered as a sparse source $\bx$ (Fig. \ref{figFeatVecX}). Figure \ref{figFeatVec} shows three-view images, View 1 (Fig. \ref{figView1}), View 2 (Fig. \ref{figView2}), View 3 (Fig. \ref{figView3}), of the given object 60 in the COIL-100 database \cite{Nene96} with corresponding SIFT-feature \cite{DLowe99} points to create feature histograms as sparse sources. The feature histograms are created by extracting all SIFT \cite{DLowe99} features from the image then propagating down a hierarchical vocabulary tree based on a hierarchical $k$-means algorithm \cite{NisterCVPR06}. In reality, we may need very high-dimensional histograms, it is essential to reduce the source dimension by CS before further processing. The idea of CS is reducing the source without prior knowledge of the source distribution. Thus a reduced $\by$, which is obtained by compressing 1000 dimensions (D) $\bx$ (Fig. \ref{figFeatVecX}), is to be conveyed to the joint decoder. We use the available compressed $\by$ to reconstruct $\bx$ with given known SIs, 1000-D $\bz_{1}$ (Fig. \ref{figFeatVecZ1})and 1000-D $\bz_{2}$ (Fig. \ref{figFeatVecZ2}), which are feature histograms of neighbor views. Obviously, $\bx,\bz_{1},\bz_{2}$ are naturally correlated in some degree.

In order to reconstruct $\bx$ from $\by$ given $\bz_{1},\bz_{2}$, we may go straight to the solution \eqref{l1-l1minimization} $\ell_{1}$-$\ell_{1}$ minimization with only one input SI either $\bz_{1}$ or $\bz_{2}$. The measurements of $\by$ required to successfully reconstruct $\bx$ is a function of the quality of $\bz_{1},\bz_{2}$. There may be a chance that the $\ell_{1}$-$\ell_{1}$ reconstruction performs worse than the $\ell_{1}$ reconstruction, incurred by a not good enough SI, e.g., $\bz_{2}$,
exposing the drawback of the $\ell_{1}$-$\ell_{1}$ reconstruction. 
Therefore, we come up with a new Reconstruction Algorithm with Multiple Side Information using Adaptive weights (RAMSIA) uses the measurement $\by$, which is obtained by $\by \hspace{-2pt}= \hspace{-2pt}\mathbf{\Phi} \bx$, and $J$ known SIs, $\bz_{1},\bz_{2},...,\bz_{J} \hspace{-2pt}\in  \hspace{-2pt}\mathbb{R}^{n}$ as inputs. The final goal is to reconstruct output $\bhx$ as best as we can.

RAMSIA is built up using the advances of the $\ell_{1}$-$\ell_{1}$ minimization \eqref{l1-l1minimization} and additionally adaptive weights to robustly work on multiple SIs. The objective function of RAMSIA shall be created as an $n$-$\ell_{1}$ minimization based on the problem in \eqref{l1-general} with
\begin{equation}\label{n-l1g}
g(\bx) \hspace{-2pt}= \lambda \Big(\beta_{0}||\mathbf{W}_{0}\bx||_{1} \hspace{-2pt}+ \hspace{-2pt}\sum\limits_{j=1}^{J}\hspace{-2pt}\beta_{j}||\mathbf{W}_{j}(\bx\hspace{-2pt}-\hspace{-2pt}\bz_{j})||_{1}\Big),
\end{equation}
where $\beta_{j}\hspace{-2pt}>\hspace{-2pt}0$ are weights among SIs and $\mathbf{W}_{j}$ are a diagonal weight matrix for each SI, $\mathbf{W}_{j}\hspace{-2pt}=\hspace{-2pt}\mathrm{diag}(w_{j1},w_{j2},...,w_{jn})$, wherein $w_{ji}\hspace{-2pt}>\hspace{-2pt}0$ is the weight in $\mathbf{W}_{j}$ at index $i$ for a given SI $\bz_{j}$. To conform the first term $\beta_{0}||\mathbf{W}_{0}\bx||_{1}$ to the remaining terms, we rewrite it as $\beta_{0}||\mathbf{W}_{0}(\bx\hspace{-2pt}-\hspace{-2pt}\bz_{0})||_{1}$ where $\bz_{0}\hspace{-2pt}=\hspace{-2pt}\mathbf{0}$, thus $g(\bx)\hspace{-2pt}=\hspace{-2pt}\lambda \hspace{-2pt}\sum_{j=0}^{J}\hspace{-2pt}\beta_{j}||\mathbf{W}_{j}(\bx\hspace{-2pt}-\hspace{-2pt}\bz_{j})||_{1}$ from \eqref{n-l1g}. We may emphasize as a difference compared with our previous work \cite{LuongDCC16} that we here compute weights in two levels. First $w_{ji}$ for intra-SI weights and then inter-SI weights $\beta_{j}$ among SIs as global weights are computed to optimize the weights. We end up formulating the objective function of the $n$-$\ell_{1}$ problem by:
\begin{equation}\label{n-l1minimizationGlobal}
    \min_{\bx}\hspace{-2pt}\Big\{\hspace{-2pt}H(\bx)\hspace{-2pt}=\hspace{-2pt}\frac{1}{2}||\mathbf{\Phi}\bx-\by||^{2}_{2} + \lambda \hspace{-2pt}\sum\limits_{j=0}^{J}\hspace{-2pt}\beta_{j}||\mathbf{W}_{j}(\bx-\bz_{j})||_{1}\Big\}.
\end{equation}
It can be noted that this \eqref{n-l1minimizationGlobal} would become the problem \eqref{l1-l1minimization} if $\beta_{j}\hspace{-2pt}=\hspace{-2pt}1$, $\mathbf{W}_{0}\hspace{-2pt}=\hspace{-2pt}\mathbf{W}_{1}\hspace{-2pt}=\hspace{-2pt}\mathbf{I}$, and $\mathbf{W}_{j}\hspace{-2pt}=\hspace{-2pt}\mathbf{0}~(j\hspace{-2pt}\geq \hspace{-2pt}2)$, where $\mathbf{I}$ is a unit diagonal matrix (size of $n\hspace{-2pt}\times \hspace{-2pt}n$), $\mathbf{I}\hspace{-2pt}=\hspace{-2pt}\mathrm{diag}(1,1,...,1)$.
\subsection{The proposed RAMSIA}
\label{algorithmRAMSIA}
Before solving the $n$-$\ell_{1}$ problem in \eqref{n-l1minimizationGlobal}, we need to address an arisen question how to determine the weight values to improve the reconstruction as well as avoid declination by the negative SIs to the reconstruction. We should distribute relevant weights not only in the intra-SI but also inter SIs. To optimize the objective function in \eqref{n-l1minimizationGlobal}, we impose constraints on both all $\beta_{j}$ and $\mathbf{W}_{j}$, by this way, we will be able to assign weights on multiple SIs according to their qualities during the iterative process. We propose to solve the $n$-$\ell_{1}$ problem in \eqref{n-l1minimizationGlobal} based on the proximal gradient method \cite{Beck09}, i.e., at every iteration $k$ we need to update, on the one hand, the weights $\beta_{j},\mathbf{W}_{j}$ and on the other hand compute $\bx$. We may have different strategies to update the weights depending on our constraints.

To solve the minimization problem in \eqref{n-l1minimizationGlobal}, we propose SI constraints by $\sum_{i=1}^{n}w_{ji}\hspace{-2pt}=\hspace{-2pt}n$ at the intra SI level and $\sum_{j=0}^{J}\beta_{j}\hspace{-2pt}=\hspace{-2pt}1$ at the inter SI level. We minimize the objective function $H(\bx)$ in \eqref{n-l1minimizationGlobal} in two steps, one for $w_{ji}$ and the remain for $\beta_{j}$ during the iterative reconstruction. Referring to our previous work \cite{LuongDCC16}, a different weighting strategy was used, where $\sum_{j=0}^{J}w_{ji}\hspace{-2pt}=\hspace{-2pt}1$ at a given index $i$ cross all SIs and all $\beta_{j}\hspace{-2pt}=\hspace{-2pt}1$.

Firstly, by considering $\bx$ and $\beta_{j}$ fixed, for each SI $\bz_{j}$ we compute $\mathbf{W}_{j}$ by optimizing \eqref{n-l1minimizationGlobal}, i.e., $\arg\hspace{-2pt}\min_{\mathbf{W}_{j}}\{H(\bx)\} \hspace{-2pt}$ is equal to
\begin{equation}\label{n-l1-weight-minimizationIntra}
\arg\hspace{-2pt}\min_{\hspace{-2pt}\mathbf{W}_{j}}\hspace{-2pt}\Big\{ \hspace{-2pt}\lambda \hspace{-4pt}\sum\limits_{j=0}^{J} \hspace{-2pt}\beta_{j}\hspace{-1pt}||\hspace{-1pt}\mathbf{W}_{j}\hspace{-1pt}(\bx \hspace{-2pt}- \hspace{-2pt}\bz_{j})||_{1}\hspace{-3pt}\Big\} \hspace{-2pt} =  \hspace{-2pt} \arg\hspace{-4pt}\min_{\{w_{ji}\}} \hspace{-3pt}\Big\{ \hspace{-2pt}\lambda  \hspace{-1pt}\beta_{j} \hspace{-4pt}\sum\limits_{i=0}^{n}\hspace{-3pt}w_{ji}\hspace{-1pt}|x_{i} \hspace{-2pt}- \hspace{-2pt}z_{ji}| \hspace{-2pt}\Big\}\hspace{-1pt},
\end{equation}
where $z_{ji}$ is an element of $\bz_{j}$ at index $i$. We can achieve the
minimization of \eqref{n-l1-weight-minimizationIntra} (following Cauchy inequality) when
all $w_{ji}|x_{i}\hspace{-2pt}-\hspace{-2pt}z_{ji}|$ are equal to a positive parameter $\eta$, i.e., $w_{ji}\hspace{-2pt}=\hspace{-2pt}\eta/(|x_{i}\hspace{-2pt}-\hspace{-2pt}z_{ji}|\hspace{-2pt}+\hspace{-2pt}\epsilon)$ with a small added $\epsilon$. Combining with the intra SI constraint $\sum_{i=1}^{n}w_{ji}\hspace{-2pt}=\hspace{-2pt}n$, we get intra SI weights $w_{ji}$ by:
\begin{equation}\label{n-l1-weightsIntra}
    w_{ji} \hspace{-2pt}=n\Big/\Big(1\hspace{-2pt}+\hspace{-2pt}(|x_{i}\hspace{-2pt}-\hspace{-2pt}z_{ji}|\hspace{-2pt}+\hspace{-1pt}\epsilon)(\hspace{-2pt}\sum\limits_{l=1,\neq i}^{n}(|x_{i}\hspace{-2pt}-\hspace{-2pt}z_{jl}|\hspace{-2pt}+\hspace{-1pt}\epsilon)^{-1})\Big).
\end{equation}

Secondly, given $\bx$ and $\mathbf{W}_{j}$, we compute $\beta_{j}$ at the inter-SI level by optimizing the problem in \eqref{n-l1minimizationGlobal}:
\begin{equation}\label{n-l1-weight-minimizationInter}
  \arg\hspace{-2pt}\min_{\{\beta_{j}\}}\{H(\bx)\}= \arg\hspace{-2pt}\min_{\{\beta_{j}\}}\{\lambda \sum\limits_{j=0}^{J}\beta_{j}||\mathbf{W}_{j}(\bx-\bz_{j})||_{1}\}.
\end{equation}
Similar to \eqref{n-l1-weight-minimizationIntra}, from \eqref{n-l1-weight-minimizationInter} we obtain $\beta_{j}$ as this formula:
 \begin{equation}\label{n-l1-Beta}
    \beta_{j}= \eta /({||\mathbf{W}_{j}(\bx-\bz_{j})||_{1}+\epsilon}).
\end{equation}
Combining \eqref{n-l1-Beta} with the inter SI constraint $\sum_{j=0}^{J}\beta_{j}=1$, we get the inter SI weights $\beta_{j}$ by:
\begin{equation}\label{n-l1-BetaInter}
\beta_{j}\hspace{-2pt} =\hspace{-2pt} 1\hspace{-2pt}\Big/\hspace{-2pt}\Big(1\hspace{-2pt}+\hspace{-1pt}(||\mathbf{W}_{j}(\bx\hspace{-2pt}-\hspace{-2pt}\bz_{j})||_{1}\hspace{-2pt}+\hspace{-1pt}\epsilon)(\hspace{-5pt}\sum\limits_{l=0,\neq j}^{J}\hspace{-2pt}(||\mathbf{W}_{l}(\bx\hspace{-2pt}-\hspace{-2pt}\bz_{l})||_{1}\hspace{-2pt}+\hspace{-1pt}\epsilon)^{-1})\hspace{-2pt}\Big).
\end{equation}

Given $\mathbf{W}_{j}$ and $\beta_{j}$, we compute $\bx^{(k)}$ at iteration $k$ to minimize the problem in \eqref{n-l1minimizationGlobal}. 
Based on the proximal gradient method \cite{Beck09}, $\bx^{(k)}$ is obtained from \eqref{l1-proximal}, where $g(\bx)\hspace{-5pt}=\hspace{-5pt}\lambda \hspace{-2pt}\sum_{j=0}^{J}\hspace{-2pt}\beta_{j}||\mathbf{W}_{j}(\bx\hspace{-2pt}-\hspace{-2pt}\bz_{j})||_{1}$. The remaining key question is how to compute the proximal operator $\Gamma_{\hspace{-2pt}\frac{1}{L}g}(\hspace{-1pt}\bx\hspace{-1pt})$. From \eqref{n-l1-proximalOperatorElementCompute} (derived in Appendix), $\Gamma_{\hspace{-2pt}\frac{1}{L}g}(\hspace{-1pt}x_{i}\hspace{-1pt})$ is obtained by:
\begin{equation}\label{n-l1-proximalOperatorElementFinalInter}
    \Gamma_{\hspace{-2pt}\frac{1}{L}g}(x_{i}) \hspace{-2pt}= \hspace{-5pt}\left\{\hspace{-2pt}
  \begin{array}{l}
    \hspace{-5pt}x_{i}\hspace{-2pt}-\hspace{-2pt}\frac{\lambda}{L} \hspace{-2pt} \sum\limits_{j=0}^{J}\hspace{-2pt}\beta_{j}w_{ji}(\hspace{-2pt}-1\hspace{-2pt})^{\mathfrak{b}(l<j)}\mathrm{~if\hspace{1pt}}\eqref{n-l1-proximalX}
; \\[-3pt]
 \hspace{-5pt}z_{li}~~~~~~~~~~~~~~~~~~~~~~~~~~~~~~~~~~~~~~\mathrm{if\hspace{1pt}} \eqref{n-l1-proximalZ};
  \end{array}
\right.
\end{equation}

Finally, we sum up the proposed RAMSIA in Algorithm \ref{RAMSIAAlg-IC} based on a fast iterative FISTA algorithm \cite{Beck09}, where the main difference from FISTA is the iterative computation of the updated weights. RAMSIA iteratively updates two weight levels in turn $w_{ji}$ in \eqref{n-l1-weightsIntra} and $\beta_{j}$ in \eqref{n-l1-BetaInter}. It can be noted that the \textit{Stopping criteria} in Algorithm \ref{RAMSIAAlg-IC} can be either a maximum iteration number $k_{\max}$, a relative variation of the objective function $H(\bx)$ \eqref{n-l1minimizationGlobal}, or a change of the number of non-zero components of the estimate $\bx^{(k)}$. In this work, the relative variation of $H(\bx)$ \eqref{n-l1minimizationGlobal} is chosen.
\setlength{\textfloatsep}{0pt}
\begin{algorithm}[t!]
\DontPrintSemicolon \SetAlgoLined
\textbf{Input}: $\by,\mathbf{\Phi},\bz_{1},\bz_{2},...,\bz_{J}$;\\
\textbf{Output}: $\bhx$;\\
  \tcp{Initialization.}
    $\mathbf{W}_{0}^{(1)}\hspace{-2pt}=\hspace{-2pt}\mathbf{I}$; $\beta_{0}^{(1)}\hspace{-2pt}=\hspace{-2pt}1$; $\mathbf{W}_{j}^{(1)}\hspace{-2pt}=\hspace{-2pt}\mathbf{0}$;  $\beta_{j}^{(1)}\hspace{-2pt}=\hspace{-2pt}0~(1\hspace{-2pt}\leq \hspace{-2pt} j\hspace{-2pt}\leq \hspace{-2pt}J)$;
   $\bu^{(1)}\hspace{-2pt}=\hspace{-2pt}\bx^{(0)}\hspace{-2pt}=\hspace{-2pt}\mathbf{0}$; $L\hspace{-2pt}=\hspace{-2pt}L_{\nabla f}$; $\lambda,\epsilon \hspace{-2pt}>\hspace{-2pt}0$; $t_{1}\hspace{-2pt}=\hspace{-2pt}1$; $k\hspace{-2pt}=\hspace{-2pt}0$; \\
  \While{Stopping criterion is false}{
       $k=k+1$; \\
       \tcp{Solving given the weights.}
        $\nabla f(\bu^{(k)})=\mathbf{\Phi}^{\mathrm{T}}(\mathbf{\Phi} \bu^{(k)}-\by)$; \\
	   $\bx^{(k)}\hspace{-2pt}= \hspace{-2pt}\Gamma_{\hspace{-2pt}\frac{1}{L}g}\hspace{-2pt}\Big(\bu^{(k)}\hspace{-2pt}-\hspace{-2pt}\frac{1}{L}\nabla f(\bu^{(k)})\hspace{-2pt}\Big)$; $\Gamma_{\hspace{-2pt}\frac{1}{L}g}(.)$ is given by \eqref{n-l1-proximalOperatorElementFinalInter};\\
	   \tcp{Computing the updated weights.}
	   $w_{ji}^{(k+1)} = \frac{n}{1+\Big(|x_{i}^{(k)}-z_{ji}|+\epsilon\Big)\Big(\sum\limits_{l\neq i}(|x_{i}^{(k)}-z_{jl}|+\epsilon)^{-1}\Big)}$;
\\
   $\beta_{j}^{(k+1)}\hspace{-2pt} =\hspace{-2pt}$\\ $\frac{1}{\hspace{-1pt}1\hspace{-1pt}+\hspace{-1pt}\Big(||\mathbf{W}_{j}^{(k+1)}(\bx^{(k)}\hspace{-1pt}-\hspace{-1pt}\bz_{j})||_{1}\hspace{-1pt}+\hspace{-1pt}\epsilon\Big)\Big(\hspace{-2pt}\sum\limits_{l\neq j}\hspace{-3pt}(||\mathbf{W}_{l}^{(k+1)}(\bx^{(k)}\hspace{-1pt}-\hspace{-1pt}\bz_{l})||_{1}\hspace{-1pt}+\hspace{-1pt}\epsilon)^{-1}\Big)\hspace{-2pt}}$;
\\
       \tcp{Updating new values.}
        $t_{k+1}\hspace{-2pt}=\hspace{-2pt}(1\hspace{-2pt}+\hspace{-2pt}\sqrt{1\hspace{-2pt}+\hspace{-2pt}4t_{k}^{2}})/2$;\\
        $\bu^{(k+1)}\hspace{-2pt}=\hspace{-2pt}\bx^{(k)}\hspace{-2pt}+\hspace{-2pt}\frac{t_{k}\hspace{-2pt}-\hspace{-1pt}1}{t_{k+1}}(\bx^{(k)}\hspace{-2pt}-\hspace{-2pt}\bx^{(k-1)})$;\\
  }
\Return $\bx^{(k)}$;
\caption{The proposed RAMSIA algorithm.}\label{RAMSIAAlg-IC}
\end{algorithm}

\section{Experimental Results}
\label{Experiment}
\begin{figure*}[t!]
  \centering
\subfigure[\vspace{-15pt}Object 16]{\includegraphics[width=0.71\textwidth]{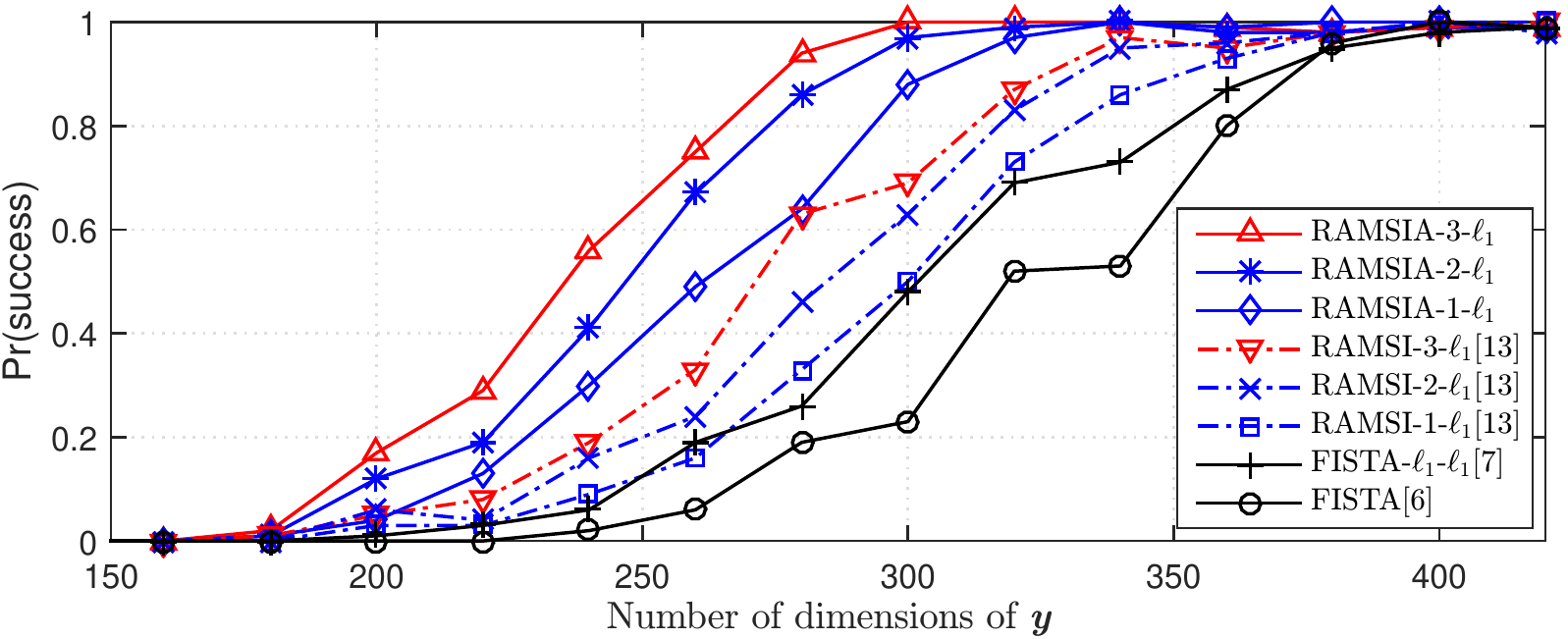}\label{fig59}}\hspace{0pt}
\subfigure[\vspace{-15pt}Object 60]{\includegraphics[width=0.71\textwidth]{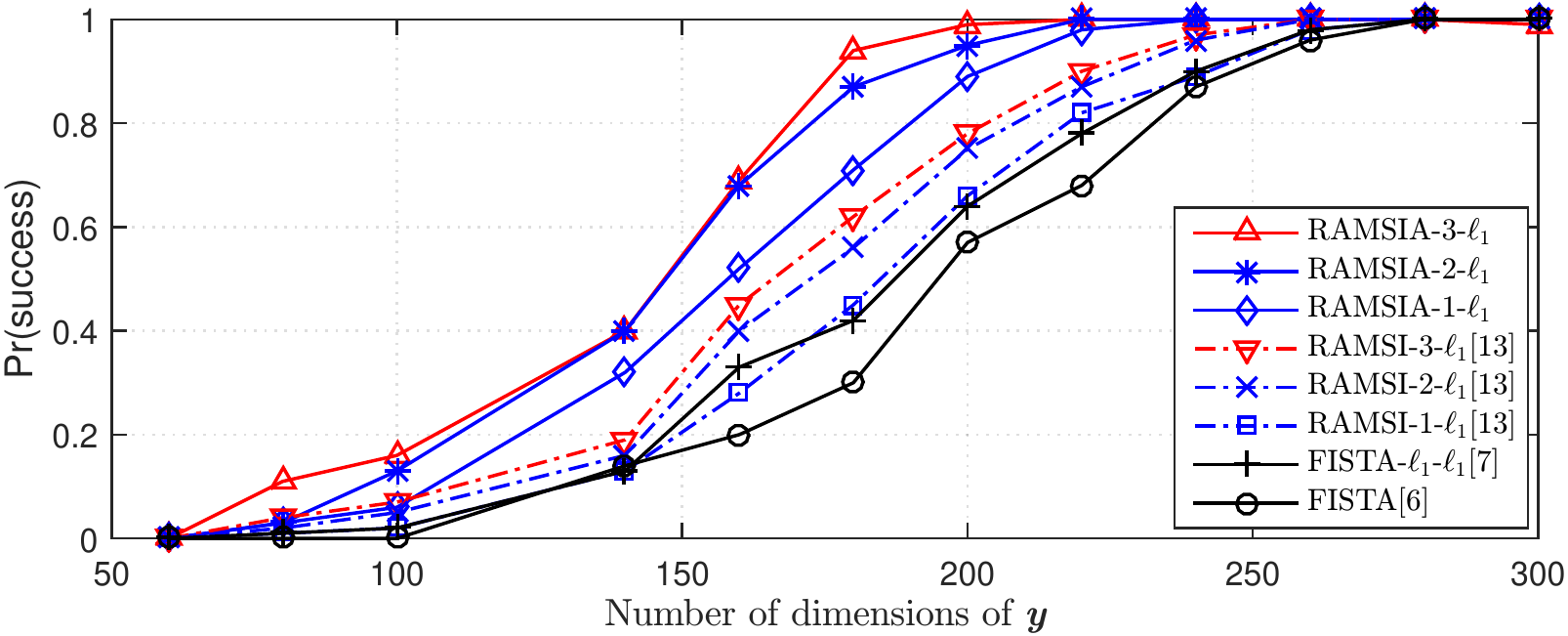}\label{fig60}} \hspace{0pt}
\subfigure[\vspace{-15pt}Generated signals]{\includegraphics[width=0.71\textwidth]{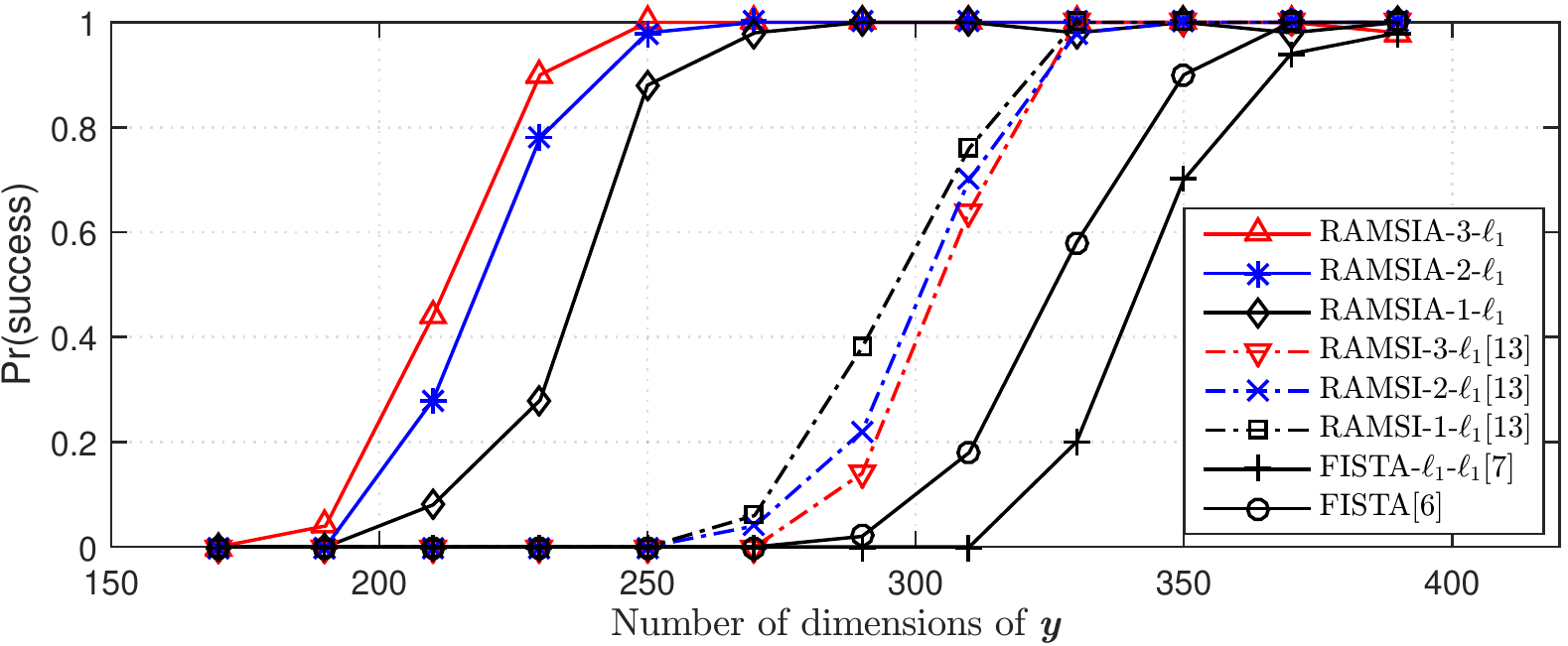}\label{fig100300}} \hspace{0pt}
  \caption{Success probabilities of reconstructing the original 1000-D $\bx$ vs. number of dimensions of the compressed $\by$ using 1,2,3 SIs.}
  \label{figAccuracy}
\end{figure*}
First we consider the reconstruction of multiview feature histograms as sparse sources , which could be used in multiview object recognition. Given an image, its feature histogram is formed as in Sec. \ref{problem}, e.g., $\bx$. The size of the vocabulary tree \cite{NisterCVPR06} depends on the value of $k$ and number of hierarchies, e.g., if $k\hspace{-2pt} = \hspace{-2pt}10$ and 3 hierarchies, $n \hspace{-2pt}= \hspace{-2pt}1000$ vocabularies as 1000-D, which may be a very big number in reality. Because of the small number of features in a single image, the histogram vector $\bx$ is sparse. Therefore, $\bx$ is first projected into the compressed vector $\by$ that is to be sent to the decoder. At the joint decoder, we suppose $\bx$ is to be reconstructed given some already decoded histograms of neighbor views, e.g., $\bz_{1}$,$\bz_{2}$,$\bz_{3}$. In this work, we use the COIL-100 \cite{Nene96} containing multiview images of 100 small objects with different angle degrees. For the sake of ensuring our experimental setup  a realistic circumstance of multiple distributed sources without given correspondences, we randomly select the 4 neighbor views (3 neighbors as SIs) of representative objects 16 (Fig. \ref{fig59}) and 60 (Fig. \ref{fig60}) over 72 views captured through 360 degrees in the COIL-100 \cite{Nene96} multiview database. In addition, we consider the reconstruction of a generated sparse source $\bx$ given three known SIs (Fig. \ref{fig100300}). We generate $\bx$ with $n\hspace{-2pt}=\hspace{-2pt}1000$ with 100 supports (number of non-zeros), which is generated from the
standard i.i.d. Gaussian distribution. To illustrate a scenario of the less correlated SIs, we generate SIs with all supports of subtractions, $\bx\hspace{-2pt}-\hspace{-2pt}\bz_{1}$, $\bx\hspace{-2pt}-\hspace{-2pt}\bz_{1}$, $\bx\hspace{-2pt}-\hspace{-2pt}\bz_{1}$ are equal to 300.

We will evaluate and compare the reconstruction accuracy of the proposed RAMSIA exploiting different number of SIs against the existing $\ell_{1}$ reconstructions. Let RAMSIA-$J$-$\ell_{1}$ denote the RAMSIA reconstruction, where $J$ indicates number of SIs, e.g., RAMSIA-$1$-$\ell_{1}$, RAMSIA-$2$-$\ell_{1}$, RAMSIA-$3$-$\ell_{1}$ are RAMSIA using 1 ($\bz_{1}$), 2 ($\bz_{1}$,$\bz_{2}$), and 3 ($\bz_{1}$,$\bz_{2}$,$\bz_{3}$) SIs, respectively. Similarly, RAMSI-$1$-$\ell_{1}$, RAMSI-$2$-$\ell_{1}$, RAMSI-$3$-$\ell_{1}$ are three versions in our previous work \cite{LuongDCC16}. Let FISTA-$\ell_{1}$-$\ell_{1}$ denote the $\ell_{1}$-$\ell_{1}$ CS reconstruction with one SI ($\bz_{1}$) \cite{MotaGLOBALSIP14}. The existing FISTA \cite{Beck09} and FISTA-$\ell_{1}$-$\ell_{1}$ reconstructions \cite{MotaGLOBALSIP14} are used for comparison. The original source 1000-D $\bx$ is compressed into different lower-dimensions $\by$. We assess the accuracy of a reconstructed $\bhx$ versus the $\bx$ via the success probability, denoted as $\mathrm{Pr(success)}$, versus the number of dimensions. For a fixed dimension, the $\mathrm{Pr(success)}$ is the number of times, in which the source $\bx$ is recovered as $\bhx$ with an error $||\bhx\hspace{-4pt}-\hspace{-4pt}\bx||_{2}/||\bx||_{2}\hspace{-4pt} \leq \hspace{-4pt}10^{-3}$, divided by the total 100 trials (each trial considered randomly selecting 4 neighbor views for objects 16,60 (Figs. \ref{fig59}+\ref{fig60}) and different generated $\bx,\bz_{1}$,$\bz_{2}$,$\bz_{3},\mathbf{\Phi}$ for generated signals (Fig. \ref{fig100300})). RAMSIA (Algorithm \ref{RAMSIAAlg-IC}) thus has $n\hspace{-2pt}=\hspace{-2pt}1000$, $m\hspace{-2pt}<\hspace{-2pt}n$, $J\hspace{-2pt}=\hspace{-2pt}1,2,3$ and we set parameters $\epsilon\hspace{-2pt}=\hspace{-2pt}0.1$, $\lambda\hspace{-2pt}=\hspace{-2pt}10^{-5}$.

Figure \ref{figAccuracy} presents the performances of the proposed RAMSIA as well as RAMSI \cite{LuongDCC16} with 1,2,3 SIs, FISTA \cite{Beck09}, and FISTA-$\ell_{1}$-$\ell_{1}$ \cite{MotaGLOBALSIP14} in terms of the success probabilities versus dimensions. It shows clearly that RAMSIA significantly improves the reconstruction accuracy. Furthermore, the results using 3 SIs, RAMSIA-3-$\ell_{1}$ gives highest accuracy and the performance of RAMSIA-2-$\ell_{1}$ is higher than that of RAMSIA-1-$\ell_{1}$. For instance, RAMSIA requires 300, 320, 340 measurements corresponding to using 3, 2, 1 SIs to successfully reconstruct Object 16 (Fig. \ref{fig59}). Despite exploiting only the same one SI, RAMSIA-1-$\ell_{1}$ is absolutely better than FISTA-$\ell_{1}$-$\ell_{1}$. For the signal in Fig. \ref{fig100300}, the accuracies of FISTA-$\ell_{1}$-$\ell_{1}$ are worse than those of FISTA, i.e., SI does not help, however, RAMSIA-$\ell_{1}$-$\ell_{1}$ still significantly outperforms FISTA. These results may highlight the drawback of $\ell_{1}$-$\ell_{1}$ minimization when SI qualities are not good enough. Therefore, if we exploit the weighted $n$-$\ell_{1}$ minimization as RAMSIA, our reconstruction can deal with the negative instances by adaptive weights, e.g., less weights on such negative cases of SI $\bz_{1}$, to achieve the improvements for all scenarios.

\section{Conclusion}
\label{conclusion}
This paper proposed an sparse signal reconstruction with multiple SIs for the distributed sources by solving the general $n$-$\ell_{1}$ minimization problem. The proposed RAMSIA scheme iteratively updated the adaptive weights in two levels not only in each intra-SI but among SIs to adaptively optimize the reconstruction. RAMSIA took advantage of exploiting both intra-source and inter-source redundancies to adapt to the changes of the heterogeneous sparse sources to deal with the different SI qualities. We experimentally tested RAMSIA on the multiple feature histograms as multiview sparse sources captured from distributed cameras and also the generated signals. The results showed that RAMSIA robustly outperformed the conventional CS and recent $\ell_{1}$-$\ell_{1}$ minimization methods. Moreover, RAMSIA with higher number of SIs gained more improvements than the one with smaller number of SIs.

\section{Appendix}\label{appendix}
We shall compute the proximal operator $\Gamma_{\hspace{-5pt}\frac{1}{L}g}(\bx)$ \eqref{l1-proximalOperator} with $g(\bx)\hspace{-2pt}=\hspace{-2pt}\lambda \hspace{-2pt} \sum_{j=0}^{J}\hspace{-2pt}\beta_{j}\hspace{-1pt}||\mathbf{W}_{j}(\hspace{-1pt}\bx\hspace{-2pt}-\hspace{-2pt}\bz_{j}\hspace{-1pt})||_{1}$. From \eqref{l1-proximalOperator}, $\Gamma_{\frac{1}{L}g}(\bx)$ is expressed by:
\begin{equation}\label{n-l1-proximalOperatorCompute}
    \Gamma_{\hspace{-2pt}\frac{1}{L}g}(\bx)\hspace{-2pt} = \hspace{-2pt}\arg\hspace{-2pt}\min_{\bv \in\mathbb{R}^{n}}\Big\{\hspace{-2pt} \frac{\lambda}{L} \hspace{-2pt}\sum\limits_{j=0}^{J}\hspace{-2pt}\beta_{j}||\mathbf{W}_{j}(\bv\hspace{-2pt}-\hspace{-2pt}\bz_{j})||_{1} \hspace{-2pt}+\hspace{-2pt} \frac{1}{2}||\bv\hspace{-2pt}-\hspace{-2pt}\bx||^{2}_{2}\Big\}.
\end{equation}
We may note that both terms in \eqref{n-l1-proximalOperatorCompute} are separable in $\bv$ and thus we can minimize each element $v_{i}$ of $\bv$ individually as 
\begin{equation}\label{n-l1-proximalOperatorElement}
    \Gamma_{\hspace{-2pt}\frac{1}{L}g}(x_{i})\hspace{-2pt} =\hspace{-2pt} \argmin_{v_{i} \in\mathbb{R}}\hspace{-3pt}\Big\{\hspace{-2pt}h(v_{i})\hspace{-2pt}=\hspace{-2pt}\frac{\lambda}{L}\hspace{-4pt} \sum\limits_{j=0}^{J}\hspace{-2pt}\beta_{j}w_{ji}|v_{i}\hspace{-2pt}-\hspace{-2pt}z_{ji}|\hspace{-2pt} + \hspace{-2pt} \frac{1}{2}(v_{i}\hspace{-2pt}-\hspace{-2pt}x_{i})^{2}\hspace{-2pt}\Big\}.
\end{equation}

We consider the $\partial h(v_{i})/\partial v_{i}$. Without loss of generality, we assume $-\infty \hspace{-2pt}\leq \hspace{-2pt}z_{0i}\hspace{-2pt}\leq \hspace{-2pt}z_{1i}\hspace{-2pt}\leq\hspace{-2pt}...\hspace{-2pt}\leq \hspace{-2pt}z_{Ji}\hspace{-2pt}\leq \hspace{-2pt}\infty$. For convenience, let us denote $z_{-1i}\hspace{-2pt}=\hspace{-2pt}-\infty$ and $z_{J+1i}\hspace{-2pt}=\hspace{-2pt}\infty$. When $v_{i}$ is located in one of the intervals, we suppose $v_{i}\hspace{-2pt}\in \hspace{-2pt}(z_{li},z_{l+1i})$ with $-1\hspace{-2pt}\leq \hspace{-2pt}l \hspace{-2pt}\leq \hspace{-2pt}J$, where $\partial h(v_{i})$ exists. Taking the derivative of $h(v_{i})$ in $(z_{li},z_{l+1i})$ delivers
\begin{equation}\label{n-l1-proximalOperatorElementDerivative}
    \frac{\partial h(v_{i})}{\partial v_{i}}\hspace{-2pt}=\hspace{-2pt} \frac{\lambda}{L} \sum\limits_{j=0}^{J}\beta_{j}w_{ji}\mathrm{sign}(v_{i}\hspace{-2pt}-\hspace{-2pt}z_{ji})\hspace{-2pt} + \hspace{-2pt}(v_{i}\hspace{-2pt}-\hspace{-2pt}x_{i}),
\end{equation}
where $\mathrm{sign}(.)$ is a sign function. In addition, let $\mathfrak{b}(.)$ denote a boolean function, i.e., $\mathfrak{b}(l\hspace{-2pt}<\hspace{-2pt}j)\hspace{-2pt}=\hspace{-2pt}1$ if $l\hspace{-2pt}<\hspace{-2pt}j$, otherwise $\mathfrak{b}(l\hspace{-2pt}<\hspace{-2pt}j)\hspace{-2pt}=\hspace{-2pt}0$. Consequently, $\mathrm{sign}(v_{i}\hspace{-2pt}-\hspace{-2pt}z_{ji})\hspace{-2pt}=\hspace{-2pt}(-1)^{\mathfrak{b}(l<j)}$ and from \eqref{n-l1-proximalOperatorElementDerivative}, we rewrite:
\begin{equation}\label{n-l1-proximalOperatorElementBoolean}
    \frac{\partial h(v_{i})}{\partial v_{i}}\hspace{-2pt}= \hspace{-2pt}\frac{\lambda}{L} \hspace{-2pt}\sum\limits_{j=0}^{J}\beta_{j}w_{ji}(-1)^{\mathfrak{b}(l<j)}\hspace{-2pt} + \hspace{-2pt}(v_{i}\hspace{-2pt}-\hspace{-2pt}x_{i}).
\end{equation}

When setting $\partial h(v_{i})/\partial v_{i}\hspace{-2pt}=\hspace{-2pt}0$ to minimize $h(v_{i})$, we derive:
\begin{equation}\label{partialZero}
  v_{i}\hspace{-2pt}=\hspace{-2pt}x_{i}\hspace{-2pt}-\hspace{-2pt}\frac{\lambda}{L}\hspace{-2pt} \sum_{j=0}^{J}\hspace{-2pt}\beta_{j}w_{ji}(-1)^{\mathfrak{b}(l<j)}.
\end{equation}
This $v_{i}$ \eqref{partialZero} is only valid in $(z_{li},z_{l+1i})$, i.e.,
 \begin{equation}\label{n-l1-proximalX}
    z_{li}\hspace{-2pt}+\hspace{-2pt}\frac{\lambda}{L}\hspace{-2pt} \sum\limits_{j=0}^{J}\hspace{-2pt}\beta_{j}w_{ji}(\hspace{-2pt}-1\hspace{-2pt})^{\mathfrak{b}(l<j)}\hspace{-4pt}<\hspace{-3pt}x_{i}\hspace{-3pt}<\hspace{-4pt}z_{l+1i}\hspace{-2pt}+\hspace{-2pt}\frac{\lambda}{L}\hspace{-2pt} \sum\limits_{j=0}^{J}\hspace{-2pt}\beta_{j}w_{ji}(\hspace{-2pt}-1\hspace{-2pt})^{\mathfrak{b}(l<j)}.
\end{equation}
In case of that $x_{i}$ does not belong to alike intervals in \eqref{n-l1-proximalX}, i.e.,
 \begin{equation}\label{n-l1-proximalZ}
z_{li}\hspace{-2pt}+\hspace{-1pt}\frac{\lambda}{L} \hspace{-3pt}\sum\limits_{j=0}^{J}\hspace{-2pt}\beta_{j}w_{ji}(\hspace{-2pt}-1\hspace{-2pt})^{\mathfrak{b}(l-1<j)}\hspace{-4pt}\leq \hspace{-3pt}x_{i}\hspace{-3pt}\leq \hspace{-2pt} z_{li}\hspace{-2pt}+\hspace{-1pt}\frac{\lambda}{L} \hspace{-3pt}\sum\limits_{j=0}^{J}\hspace{-2pt}\beta_{j}w_{ji}(\hspace{-2pt}-1\hspace{-2pt})^{\mathfrak{b}(l<j)}.
\end{equation}
We will prove that $h(v_{i})$ \eqref{n-l1-proximalOperatorElement} is minimum when $v_{i}=z_{li}$ in the following Lemma \ref{lemmaMin}.
\begin{lemma}
\label{lemmaMin}
Given $x_{i}$ belonging to the intervals represented in \eqref{n-l1-proximalZ}, $h(v_{i})$ in \eqref{n-l1-proximalOperatorElement} is minimum when $v_{i}=z_{li}$.
\end{lemma}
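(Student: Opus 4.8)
The plan is to exploit the convexity of $h(v_i)$ in \eqref{n-l1-proximalOperatorElement}. Since $h$ is the sum of the strictly convex quadratic $\tfrac{1}{2}(v_i-x_i)^2$ and the convex piecewise-linear term $\tfrac{\lambda}{L}\sum_{j=0}^{J}\beta_j w_{ji}|v_i-z_{ji}|$ (recall $\beta_j,w_{ji}>0$), it is strictly convex and admits a unique global minimizer. Away from the breakpoints $\{z_{ji}\}$ it is differentiable, and the case where the stationary point \eqref{partialZero} falls strictly inside some interval $(z_{li},z_{l+1i})$ is already covered by \eqref{n-l1-proximalX}. It therefore remains to treat the complementary case \eqref{n-l1-proximalZ}, in which $x_i$ lands in none of those open intervals; I would show that the minimizer is then pinned at the kink $v_i=z_{li}$ by checking the first-order optimality condition there.

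Concretely, I would compute the two one-sided derivatives of $h$ at $v_i=z_{li}$. Approaching from the right, $v_i\in(z_{li},z_{l+1i})$, so by \eqref{n-l1-proximalOperatorElementBoolean} the sign pattern is $(-1)^{\mathfrak{b}(l<j)}$ and
\[
\lim_{v_i\to z_{li}^{+}}\frac{\partial h(v_i)}{\partial v_i}=(z_{li}-x_i)+\frac{\lambda}{L}\sum_{j=0}^{J}\beta_j w_{ji}(-1)^{\mathfrak{b}(l<j)}.
\]
Approaching from the left, $v_i\in(z_{l-1i},z_{li})$, only the sign of $v_i-z_{ji}$ for $j=l$ flips, yielding the pattern $(-1)^{\mathfrak{b}(l-1<j)}$ and the analogous left derivative with $\mathfrak{b}(l-1<j)$ in place of $\mathfrak{b}(l<j)$. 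By convexity, $z_{li}$ is the global minimizer precisely when the left derivative is $\leq 0$ and the right derivative is $\geq 0$. Rearranging these two inequalities for $x_i$ reproduces exactly the lower and upper bounds in \eqref{n-l1-proximalZ}, which establishes the claim.

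The step I expect to require the most care is arguing that $z_{li}$ is a genuine global (not merely local) minimum and that the two-sided condition is consistent. This rests on the monotonicity of the sign-weighted sums $S_l:=\tfrac{\lambda}{L}\sum_{j=0}^{J}\beta_j w_{ji}(-1)^{\mathfrak{b}(l<j)}$ in $l$: incrementing $l$ flips exactly the term $j=l$ from $-1$ to $+1$, increasing $S_l$ by $2\tfrac{\lambda}{L}\beta_l w_{li}>0$, so $S_{l-1}<S_l$ and the left bound $z_{li}+S_{l-1}$ in \eqref{n-l1-proximalZ} lies strictly below the right bound $z_{li}+S_l$, making the interval nonempty. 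The same monotonicity shows $\partial h/\partial v_i$ is negative on $(z_{l-1i},z_{li})$ and positive on $(z_{li},z_{l+1i})$ under \eqref{n-l1-proximalZ}, so $h$ decreases up to $z_{li}$ and increases thereafter; combined with strict convexity this upgrades the stationarity check to global minimality. I would also note that the boundary conventions $z_{-1i}=-\infty$ and $z_{J+1i}=\infty$ let the extreme cases $l=0$ and $l=J$ be handled uniformly.
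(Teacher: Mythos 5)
Your proof is correct, but it takes a genuinely different route from the paper's. You argue by convex analysis: $h$ is strictly convex (and coercive), so $v_i=z_{li}$ is the unique global minimizer if and only if the two one-sided derivatives bracket zero, i.e.\ $(z_{li}-x_i)+S_{l-1}\le 0\le (z_{li}-x_i)+S_l$ with $S_l:=\tfrac{\lambda}{L}\sum_{j=0}^{J}\beta_j w_{ji}(-1)^{\mathfrak{b}(l<j)}$, and these inequalities are precisely the two bounds in \eqref{n-l1-proximalZ}; your monotonicity remark $S_l-S_{l-1}=2\tfrac{\lambda}{L}\beta_l w_{li}>0$ correctly shows the intervals are nonempty and, together with \eqref{n-l1-proximalX}, partition the line. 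The paper instead proceeds by direct estimation: it recenters $h$ at $z_{li}$ in \eqref{n-l1-proximalOperatorElementLemma}, applies $|a-b|\ge |a|-|b|$ and $-(v_i-z_{li})(x_i-z_{li})\ge -|v_i-z_{li}||x_i-z_{li}|$, deduces $|x_i-z_{li}|\le\tfrac{\lambda}{L}\sum_j\beta_j w_{ji}$ from \eqref{n-l1-proximalZ} via \eqref{n-l1-proximalZrewriteMore}, and concludes that the $v_i$-dependent part \eqref{leftInequality} of the resulting lower bound is nonnegative and vanishes at $v_i=z_{li}$. Your approach buys two things. First, it is an if-and-only-if characterization: it establishes that \eqref{n-l1-proximalZ} is \emph{necessary} as well as sufficient for the kink to be optimal, which is what makes the case split in \eqref{n-l1-proximalOperatorElementCompute} exhaustive and mutually exclusive. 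Second, it is tight where the paper's estimate is not: the paper's triangle-inequality bound evaluated at $v_i=z_{li}$ equals $h(z_{li})-2\tfrac{\lambda}{L}\sum_j\beta_j w_{ji}|z_{ji}-z_{li}|$, so minimizing that lower bound at $z_{li}$ does not by itself give $h(v_i)\ge h(z_{li})$; your subgradient condition (equivalently, replacing the paper's bound by the exact subgradient inequality $|v_i-z_{ji}|\ge |z_{li}-z_{ji}|+\mathrm{sign}(z_{li}-z_{ji})(v_i-z_{li})$) closes exactly this looseness. What the paper's computation buys in exchange is that it is entirely elementary, using only scalar inequalities and no convex-analysis machinery. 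One minor caution for your write-up: the algorithm initializes some $\beta_j$ and $\mathbf{W}_j$ to zero, in which case $S_{l-1}=S_l$ can occur and the interval in \eqref{n-l1-proximalZ} degenerates to a point; your non-strict first-order conditions still apply verbatim, but the strict inequality $S_{l-1}<S_l$ should then be stated as $S_{l-1}\le S_l$.
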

\begin{proof}
We re-express $h(v_{i})$ \eqref{n-l1-proximalOperatorElement} by:
\begin{equation}
\label{n-l1-proximalOperatorElementLemma}
  h(\hspace{-1pt}v_{i}\hspace{-1pt})\hspace{-3pt} = \hspace{-3pt}\frac{\lambda}{L}\hspace{-3pt} \sum_{j=0}^{J}\hspace{-3pt}\beta_{j}w_{ji}|(\hspace{-1pt}v_{i}\hspace{-2pt}-\hspace{-2pt}z_{li}\hspace{-1pt})\hspace{-2pt}-\hspace{-2pt}(\hspace{-1pt}z_{ji}\hspace{-2pt}-\hspace{-2pt}z_{li}\hspace{-1pt})| \hspace{-2pt}+ \hspace{-2pt}\frac{1}{2}((\hspace{-1pt}v_{i}\hspace{-2pt}-\hspace{-2pt}z_{li}\hspace{-1pt})\hspace{-2pt}-\hspace{-2pt}(\hspace{-1pt}x_{i}\hspace{-2pt}-\hspace{-2pt}z_{li}\hspace{-1pt}))^{2}.
\end{equation}
Applying a simple inequality $|a\hspace{-2pt}-\hspace{-2pt}b|\geq |a|\hspace{-2pt}-\hspace{-2pt}|b|$, where $a,b\in \mathbb{R}$, to the first term and expanding the second term in \eqref{n-l1-proximalOperatorElementLemma}, we obtain:
\begin{equation}
\label{n-l1-proximalOperatorElementLemmaInequality}
\begin{split}
h(v_{i})&\hspace{-2pt} \geq\hspace{-0pt}\frac{\lambda}{L} \hspace{-2pt}\sum\limits_{j=0}^{J}\hspace{-2pt}\beta_{j}w_{ji}|v_{i}\hspace{-2pt}-\hspace{-2pt}z_{li}|\hspace{-2pt}-\hspace{-2pt}\frac{\lambda}{L} \hspace{-2pt}\sum\limits_{j=0}^{J}\hspace{-2pt}\beta_{j}w_{ji}|z_{ji}\hspace{-2pt}-\hspace{-2pt}z_{li}|\hspace{-0pt} +\\[-10pt]
&\hspace{-0pt}\frac{1}{2}(v_{i}\hspace{-2pt}-\hspace{-2pt}z_{li})^{2}\hspace{-2pt}-\hspace{-2pt}(v_{i}\hspace{-2pt}-\hspace{-2pt}z_{li})(x_{i}\hspace{-2pt}-\hspace{-2pt}z_{li})\hspace{-2pt}+\hspace{-2pt}\frac{1}{2}(x_{i}\hspace{-2pt}-\hspace{-2pt}z_{li})^{2}.
\end{split}
\end{equation}
It can be noted that 
$-(v_{i}\hspace{-2pt}-\hspace{-2pt}z_{li})(x_{i}\hspace{-2pt}-\hspace{-2pt}z_{li})\geq -|v_{i}\hspace{-2pt}-\hspace{-2pt}z_{li}||x_{i}\hspace{-2pt}-\hspace{-2pt}z_{li}|$. Thus the inequality in \eqref{n-l1-proximalOperatorElementLemmaInequality} is equivalent to:
\begin{equation}
\label{n-l1-proximalOperatorElementLemmaInequalityFurther}
\begin{split}
  \hspace{-2pt}h(v_{i})\hspace{-2pt} \geq &|v_{i}\hspace{-2pt}-\hspace{-2pt}z_{li}|\frac{\lambda}{L}\hspace{-4pt}\sum\limits_{j=0}^{J}\hspace{-2pt}\beta_{j}w_{ji} \hspace{-2pt}-\hspace{-2pt}|v_{i}\hspace{-2pt}-\hspace{-2pt}z_{li}||x_{i}\hspace{-2pt}-\hspace{-2pt}z_{li}|+ \frac{1}{2}(v_{i}\hspace{-2pt}-\hspace{-2pt}z_{li})^{2}\\[-10pt]
    &-\frac{\lambda}{L} \sum\limits_{j=0}^{J}\beta_{j}w_{ji}|z_{ji}\hspace{-2pt}-\hspace{-2pt}z_{li}| \hspace{-2pt}+\hspace{-2pt}\frac{1}{2}(x_{i}\hspace{-2pt}-\hspace{-2pt}z_{li})^{2}.
\end{split}
\end{equation}
Without difficulty, from the expression in \eqref{n-l1-proximalZ}, we get:
 \begin{equation}\label{n-l1-proximalZrewriteMore}
 \begin{split}
\hspace{-5pt}-\hspace{-1pt}\frac{\lambda}{L}\hspace{-4pt}\sum\limits_{j=0}^{J}\hspace{-2pt}\beta_{j}w_{ji}\hspace{-2pt}\leq \hspace{-2pt} x_{i}\hspace{-2pt}-\hspace{-2pt}z_{li}\hspace{-2pt}\leq \hspace{-2pt}\frac{\lambda}{L}\hspace{-4pt}\sum\limits_{j=0}^{J}\hspace{-2pt}\beta_{j}w_{ji}
\hspace{-2pt}\Leftrightarrow \hspace{-2pt}|x_{i}\hspace{-2pt}-\hspace{-2pt}z_{li}|\hspace{-2pt}\leq \hspace{-2pt}\frac{\lambda}{L}\hspace{-4pt}\sum\limits_{j=0}^{J}\hspace{-2pt}\beta_{j}w_{ji}.
\end{split}
\end{equation}
Eventually, we observe that the part including $v_{i}$ in the right hand side of the inequality $h(v_{i})$ in \eqref{n-l1-proximalOperatorElementLemmaInequalityFurther} is
 \begin{equation}\label{leftInequality}
   |v_{i}-z_{li}|\Big(\frac{\lambda}{L}\sum\limits_{j=0}^{J}\beta_{j}w_{ji}\hspace{-2pt}-\hspace{-2pt}|x_{i}-z_{li}|\Big)\hspace{-2pt}+\hspace{-2pt} \frac{1}{2}(v_{i}\hspace{-2pt}-\hspace{-2pt}z_{li})^{2}.
 \end{equation}
 With \eqref{n-l1-proximalZrewriteMore}, the expression in \eqref{leftInequality} is minimum when $v_{i}\hspace{-2pt}=\hspace{-2pt}z_{li}$. Therefore, we deduce that $h(v_{i})$ \eqref{n-l1-proximalOperatorElement} is minimum when $v_{i}\hspace{-2pt}=\hspace{-2pt}z_{li}$.
\end{proof}
In summary, from \eqref{partialZero} with conditions in \eqref{n-l1-proximalX}, \eqref{n-l1-proximalZ} and Lemma \ref{lemmaMin}, we obtain:
\begin{equation}\label{n-l1-proximalOperatorElementCompute}
\Gamma_{\hspace{-2pt}\frac{1}{L}g}(x_{i}) \hspace{-2pt}= \hspace{-4pt}\left\{\hspace{-1pt}
\begin{array}{l}
\hspace{-5pt}x_{i}\hspace{-2pt}-\hspace{-2pt}\frac{\lambda}{L} \hspace{-2pt} \sum\limits_{j=0}^{J}\hspace{-2pt}\beta_{j}w_{ji}(\hspace{-2pt}-1\hspace{-2pt})^{\mathfrak{b}(l<j)}\mathrm{~if\hspace{1pt}}\eqref{n-l1-proximalX};\\[-3pt]
\hspace{-5pt}z_{li}~~~~~~~~~~~~~~~~~~~~~~~~~~~~~~~~~~~~~~\mathrm{if\hspace{1pt}} \eqref{n-l1-proximalZ};
\end{array}
\right.
\end{equation}

\bibliographystyle{IEEEtran}
\bibliography{./IEEEfull,./IEEEabrv,./bibliography}

\end{document}